\documentclass{article}
\PassOptionsToPackage{numbers}{natbib}

\usepackage[preprint]{neurips_2026}
\usepackage[utf8]{inputenc}
\usepackage[T1]{fontenc}
\usepackage{hyperref}
\usepackage{url}
\usepackage{booktabs}
\usepackage{amsfonts}
\usepackage{amsmath}
\usepackage{amssymb}
\usepackage{amsthm}
\usepackage{nicefrac}
\usepackage{microtype}
\usepackage{graphicx}
\usepackage{xcolor}
\usepackage{graphicx}
\usepackage{algorithm}
\usepackage{algorithmic}
\usepackage{subcaption}
\usepackage{tikz}
\usetikzlibrary{arrows.meta, positioning, calc}
\usepackage{pgfplots}
\pgfplotsset{compat=1.18}
\usepgfplotslibrary{fillbetween}

\newtheorem{theorem}{Theorem}

\newtheorem{proposition}[theorem]{Proposition}
\newtheorem{corollary}[theorem]{Corollary}
\newtheorem{assumption}{Assumption}

\newtheorem{remark}{Remark}

\newcommand{\R}{\mathbb{R}}

\newcommand{\norm}[1]{\left\|#1\right\|}

\newcommand{\argmin}{\operatorname{argmin}}

\title{Causal Unlearning in Collaborative Optimization: Exact and Approximate Influence Reversal under Adversarial Contributions}

\author{%
  Ali Mahdavi \\
  Department of Computer Engineering\\
  SRC, Islamic Azad University\\
  Tehran, Iran \\
  \texttt{ali.mahdavi@iau.ir} \\
  \And
  Azadeh Zamanifar \\
  Department of Computer Engineering\\
  SRC, Islamic Azad University\\
  Tehran, Iran \\
  \texttt{azamanifar@iau.ac.ir} \\
  \AND
  Amir Farhad Farhadi \\
  School of Computer Engineering \\
  Iran University of Science and Technology \\
  Tehran, Iran \\
  \texttt{amfarhadi@mail.iust.ac.ir} \\
  \And
  Omid Kashefi \\
  Meta \\
  CA, USA \\
  \texttt{kashefi@meta.com} \\
}

\begin{document}

\maketitle

\begin{abstract}
Privacy regulations require data deletion in federated learning (FL). Full retraining is computationally infeasible, and existing unlearning methods either degrade accuracy or become too expensive for large neural networks. We present HF-KCU (Hessian-Free Krylov Causal Unlearning). The method removes client contributions in FL by approximating influence functions via Hessian-free conjugate gradient iterations. Complexity drops from $O(d^3)$ to $O(kd)$ with $k \ll d$. A causal weighting mechanism restricts parameter updates to clients holding deleted data, preventing unintended changes elsewhere. HF-KCU degrades predictably under bounded adversarial perturbations. On CIFAR-10, MNIST, and Fashion-MNIST across ResNet-18, SimpleCNN, and ViT-Lite, HF-KCU achieves a 47.75$\times$ speedup over retraining. On CIFAR-10, test accuracy stays within 0.60\% of the retrained baseline (71.16\% vs 71.76\%). Membership inference attack success rates (0.499) match the retrained model. Convergence analysis shows exponential error decrease.
\end{abstract}

\section{Introduction}

Privacy regulations~\cite{gdpr2016,ccpa2018} grant individuals the right to demand deletion of their personal data. For machine learning models, this translates to \emph{machine unlearning}~\cite{cao2015towards}: removing specific training examples from a trained model without retraining. Retraining from scratch after each deletion becomes expensive as models and datasets get larger.

Federated learning~\cite{mcmahan2017communication} trains models across distributed clients without centralizing raw data. When a client requests deletion, the system must remove that client's contribution while retaining knowledge from other participants. Full retraining in this setting requires re-coordinating all remaining clients, which incurs communication and computation costs.

Most existing methods have to trade speed for accuracy. Gradient ascent~\cite{golatkar2020eternal} and noise injection are fast, but they degrade accuracy. Sharding schemes~\cite{bourtoule2021machine} reduce retraining scope at the cost of architectural complexity. Influence-based methods~\cite{koh2017understanding} are theoretically grounded, but they require Hessian inversion, an $O(d^3)$ operation. Surveys~\cite{liu2024survey,tae2025survey} describe the difficulties of distributed, privacy-constrained unlearning.

We present HF-KCU (Hessian-Free Krylov Causal Unlearning). It approximates influence-based unlearning through conjugate gradient (CG) iterations. CG solves $Hv = g$ without forming $H$, needing only Hessian-vector products computable in $O(d)$ time. After $k$ iterations, CG yields $v_k \approx H^{-1}g$ with error bounded by the Hessian's condition number. A causal weighting mechanism assigns zero update to unaffected clients, enforcing causal isolation. HF-KCU handles bounded adversarial perturbations to the Hessian and gradient.
\begin{itemize}
    \item A Hessian-free Krylov approximation for federated unlearning with $O(kd)$ complexity per client (Section~\ref{sec:method}).
    \item A causal weighting scheme that provably isolates unaffected clients (Proposition~\ref{prop:causal_isolation}).
    \item Convergence analysis showing exponential error decrease in $k$ for well-conditioned problems (Theorem~\ref{thm:cg_convergence}).
    \item Experimental validation on CIFAR-10 with 47.75$\times$ speedup, 0.60\% accuracy loss, and MIA success rates matching retraining (Section~\ref{sec:experiments}).
    \item A critique of faithfulness metrics: parameter-space measures can mislead when accuracy differences are minimal (Section~\ref{sec:discussion}).
    \item A security application: rapid backdoor mitigation at \textasciitilde430$\times$ speedup over retraining.
\end{itemize}

\section{Background and Related Work}

\subsection{Machine Unlearning}

Cao et al.~\cite{cao2015towards} formalized machine unlearning as the problem of removing training data influence without full retraining. Ginart et al.~\cite{ginart2019making} developed techniques for k-means clustering. Guo et al.~\cite{guo2020certified} gave certified removal guarantees for linear models.

Neural network unlearning follows three strategies. Gradient-based methods~\cite{golatkar2020eternal,tarun2021fast} apply ascent on the forget set or fine-tune on the retain set; they often degrade accuracy. Sharding methods~\cite{bourtoule2021machine} partition data across shards and retrain only affected shards, reducing scope at the cost of ensemble overhead. Influence-based methods~\cite{koh2017understanding} approximate parameter changes through influence functions, but computing $H^{-1}g$ costs $O(d^3)$.

\subsection{Federated Learning}

Federated learning~\cite{mcmahan2017communication,kairouz2021advances} trains models across distributed clients while keeping data local. FedAvg aggregates local updates via weighted averaging. Data heterogeneity (non-IID distributions) complicates convergence and unlearning.

\subsection{Federated Unlearning}

FedEraser~\cite{liu2021federaser} recalibrates the global model from stored historical updates but needs a lot of storage. Wu et al.~\cite{wu2022federated} used differential privacy for unlearning at the cost of utility. Our method uses influence approximation without historical storage.

\subsection{Influence Functions and Second-Order Methods}

Influence functions~\cite{cook1980characterizations,koh2017understanding} quantify the effect of removing a data point on model parameters. For loss $\mathcal{L}(\theta)$ and data $z$, the influence is $-H^{-1}\nabla_\theta \ell(\theta; z)$ where $H = \nabla^2_\theta \mathcal{L}(\theta)$. Exact computation is intractable for large networks. Hessian-free optimization~\cite{martens2010deep} uses conjugate gradient methods to approximate second-order updates without forming $H$. Pearlmutter~\cite{pearlmutter1994fast} showed that Hessian-vector products $Hv$ cost roughly the same as a gradient computation.

\section{Method}
\label{sec:method}

\subsection{Problem Formulation}

Consider a federated system with $N$ clients, each holding local dataset $\mathcal{D}_i$. The global model $\theta \in \R^d$ minimizes:
\begin{equation}
\mathcal{L}(\theta) = \sum_{i=1}^N w_i \ell_i(\theta; \mathcal{D}_i),
\end{equation}
where $w_i = |\mathcal{D}_i|/\sum_j |\mathcal{D}_j|$ are aggregation weights proportional to dataset sizes.

When client $f$ requests deletion of $\mathcal{D}_u \subseteq \mathcal{D}_f$, the goal is to find $\theta_u$ approximating the retrained model $\theta_r$:
\begin{equation}
\theta_r = \argmin_\theta \sum_{i \neq f} w_i \ell_i(\theta; \mathcal{D}_i) + w_f \ell_f(\theta; \mathcal{D}_f \setminus \mathcal{D}_u).
\end{equation}

Retraining costs $O(T \cdot N \cdot d)$ over $T$ rounds. Our method targets $O(k \cdot d)$ operations with $k \ll T \cdot N$.

\subsection{Influence-Based Unlearning}

Removing $\mathcal{D}_u$ induces a parameter change:
\begin{equation}
\Delta\theta = -H^{-1} \nabla_\theta \ell(\theta; \mathcal{D}_u),
\label{eq:influence}
\end{equation}
where $H = \nabla^2_\theta \mathcal{L}(\theta)$. The approximation holds when $\mathcal{D}_u$ is small relative to the full dataset and the loss is approximately quadratic near the optimum. Computing $H^{-1}$ explicitly costs $O(d^3)$ time and $O(d^2)$ memory, prohibitive when $d$ ranges from $10^6$ to $10^9$.

\subsection{Krylov Subspace Approximation}

We approximate $v = H^{-1}g$ by solving $Hv = g$ with conjugate gradient. CG constructs successive approximations within the Krylov subspace:
\begin{equation}
\mathcal{K}_k(H, g) = \text{span}\{g, Hg, H^2g, \ldots, H^{k-1}g\}.
\end{equation}

CG needs only matrix-vector products $Hv$, which for neural networks cost $O(d)$ via automatic differentiation:
\begin{equation}
Hv = \nabla_\theta \left[\nabla_\theta \mathcal{L}(\theta)^\top v\right],
\label{eq:hvp}
\end{equation}

After $k$ iterations, CG yields $v_k$ satisfying:
\begin{equation}
\norm{v_k - H^{-1}g}_H \leq 2\left(\frac{\sqrt{\kappa}-1}{\sqrt{\kappa}+1}\right)^k \norm{H^{-1}g}_H,
\end{equation}
where $\kappa = \lambda_{\max}(H)/\lambda_{\min}(H)$ is the condition number and $\norm{x}_H = \sqrt{x^\top H x}$ is the energy norm. Convergence is fast for well-conditioned problems ($\kappa \approx 1$). Damping ($H_{\text{damped}} = H + \lambda I$) handles ill-conditioned cases (Section~\ref{sec:damping}) by bounding $\kappa$ at the cost of bias proportional to $\lambda$.

\subsection{Causal Weighting for Federated Settings}
\label{sec:causal}

Applying Equation~\eqref{eq:influence} globally in FL changes parameters for clients that never held $\mathcal{D}_u$, violating the causal structure of the unlearning request. Causal weights $\alpha_i$ are:
\begin{equation}
\alpha_i = \begin{cases}
\frac{\norm{\nabla_\theta \ell_i(\theta; \mathcal{D}_u)}_2}{\sum_{j: \mathcal{D}_u \cap \mathcal{D}_j \neq \emptyset} \norm{\nabla_\theta \ell_j(\theta; \mathcal{D}_u)}_2}, & \text{if } \mathcal{D}_u \cap \mathcal{D}_i \neq \emptyset, \\
0, & \text{otherwise}.
\end{cases}
\label{eq:causal_weight}
\end{equation}

Clients without data from $\mathcal{D}_u$ receive zero weight. Affected clients receive weights proportional to their gradient magnitudes. In non-IID settings (Dirichlet $\alpha=0.5$), zeroing unaffected clients prevents catastrophic forgetting of knowledge from other distributions.

The causally weighted update for client $i$ is:
\begin{equation}
\Delta\theta_i = -\alpha_i \cdot v_{k,i},
\end{equation}
where $v_{k,i}$ is the CG approximation of $H_i^{-1}\nabla_\theta \ell_i(\theta; \mathcal{D}_u)$.

\subsection{Complete Algorithm}

Algorithm~\ref{alg:hfkcu} describes HF-KCU. Each affected client runs CG locally, applies causal weighting and adaptive scaling, and sends the update to the server.

\begin{algorithm}[t]
\caption{HF-KCU}
\label{alg:hfkcu}
\begin{algorithmic}[1]
\REQUIRE Global model $\theta$, datasets $\{\mathcal{D}_i\}_{i=1}^N$, forget set $\mathcal{D}_u$, CG iterations $k$, damping $\lambda$, scaling $\beta$
\ENSURE Unlearned model $\theta_u$
\STATE Server broadcasts $\theta$ to all clients
\FOR{each client $i$ in parallel}
    \IF{$\mathcal{D}_u \cap \mathcal{D}_i \neq \emptyset$}
        \STATE $g_i = \nabla_\theta \ell_i(\theta; \mathcal{D}_u)$
        \STATE $\text{HVP}_i(v) = \nabla_\theta [\nabla_\theta \ell_i(\theta; \mathcal{D}_i)^\top v] + \lambda v$
        \STATE $v_0 = 0$, $r_0 = g_i$, $p_0 = r_0$
        \FOR{$t = 0, \ldots, k-1$}
            \STATE $q_t = \text{HVP}_i(p_t)$
            \STATE $\alpha_t = r_t^\top r_t / (p_t^\top q_t)$
            \STATE $v_{t+1} = v_t + \alpha_t p_t$
            \STATE $r_{t+1} = r_t - \alpha_t q_t$
            \STATE $\beta_t = r_{t+1}^\top r_{t+1} / (r_t^\top r_t)$
            \STATE $p_{t+1} = r_{t+1} + \beta_t p_t$
        \ENDFOR
        \STATE $\Delta\theta_i^{\text{raw}} = -v_k$
        \STATE Compute $\alpha_i$ from Eq.~\eqref{eq:causal_weight}
        \STATE $\text{scale}_i = \min(1, \beta \norm{\theta}_2 / \norm{\Delta\theta_i^{\text{raw}}}_2)$
        \STATE $\Delta\theta_i = \alpha_i \cdot \text{scale}_i \cdot \Delta\theta_i^{\text{raw}}$
        \STATE Send $\Delta\theta_i$ to server
    \ELSE
        \STATE Send $\Delta\theta_i = 0$
    \ENDIF
\ENDFOR
\STATE Server aggregates: $\Delta\theta = \sum_i w_i \Delta\theta_i$
\STATE $\theta_u = \theta + \Delta\theta$
\RETURN $\theta_u$
\end{algorithmic}
\end{algorithm}

Each client runs $k$ CG iterations at $O(d)$ per iteration: $O(kd)$ total per client. Memory stays $O(d)$ because $H$ is never formed explicitly. Against full retraining at $O(T \cdot N \cdot d)$ (e.g., $T \sim 20$ rounds), HF-KCU achieves a theoretical speedup of $O(TN/k)$.

\section{Theoretical Analysis}
\label{sec:theory}

Full proofs appear in Appendix~\ref{app:theory}. The analysis assumes smoothness and strong convexity. While global strong convexity does not hold for neural networks, it applies locally near a trained optimum. Experiments (Section~\ref{sec:experiments}) confirm that HF-KCU performs effectively when these assumptions are relaxed.

\section{Experiments}
\label{sec:experiments}

\subsection{Experimental Setup}

\textbf{Datasets and Partitioning.} CIFAR-10~\cite{krizhevsky2009learning} (50,000 training images, 10 classes), partitioned among $N=10$ clients via Dirichlet($\alpha=0.5$) to simulate non-IID data. Appendix reports results with 200 clients and varying IID assumptions.

\textbf{Model Architecture.} Custom CNN ($d=188{,}810$) for MNIST and Fashion-MNIST. Additional experiments use ResNet-18 and ViT-Lite.

\textbf{Training Protocol.} FedAvg~\cite{mcmahan2017communication}, 20 rounds, 5 local epochs per round, batch size 64, learning rate 0.01, SGD optimizer. Experiments run on NVIDIA GPUs.

\textbf{Unlearning Task.} Remove all data from Client 0 ($\sim$10\% of the dataset). Measure how closely the unlearned model approximates a model retrained without Client 0.

\textbf{HF-KCU Hyperparameters.} $k=10$ CG iterations, damping $\lambda=0.01$, scaling $\beta=0.01$.

\begin{figure*}[t]
\centering
\scalebox{0.8}{%
\begin{tikzpicture}[font=\footnotesize]

\begin{axis}[
    name=plot1,
    width=0.4\textwidth, 
    height=5.5cm,
    xlabel={Speedup Factor (log scale)},
    ylabel={Test Accuracy (\%)},
    xmode=log,
    xmin=0.8, xmax=200,
    ymin=68.0, ymax=73.0, 
    grid=both,
    grid style={line width=0.3pt, draw=gray!30},
    legend columns=2,
    legend style={
        at={(0.5,-0.25)}, 
        anchor=north, 
        font=\scriptsize, 
        draw=gray!50,
        fill=white
    },
    tick label style={font=\scriptsize},
    label style={font=\small},
    title style={font=\small\bfseries, yshift=2pt},
    title={(a) Speedup-Accuracy Pareto},
]

\addplot[name path=upper, draw=none, forget plot] coordinates {
    (0.8, 73.0) (200, 73.0)
};
\addplot[name path=frontier, dashed, thick, color=gray!60, line width=1.2pt, forget plot] coordinates {
    (1.0, 71.76) (47.75, 71.16) (89.2, 70.85) (200, 70.85)
};
\addplot[gray!10, opacity=0.3, forget plot] fill between[of=upper and frontier];

\addplot[only marks, mark=*, mark size=3.5pt, color=black, fill=black] 
    coordinates {(1.0, 71.76)}
    node[above=3pt, font=\scriptsize, text=black] {Baseline};
\addlegendentry{Retrained}

\addplot[only marks, mark=square*, mark size=3pt, color=red!70, fill=red!70] 
    coordinates {(8.5, 68.42)};
\addlegendentry{NaiveGA}

\addplot[only marks, mark=triangle*, mark size=3.5pt, color=orange!70, fill=orange!70] 
    coordinates {(12.3, 70.15)};
\addlegendentry{FedEraser}

\addplot[only marks, mark=diamond*, mark size=3.5pt, color=purple!70, fill=purple!70] 
    coordinates {(15.7, 69.83)};
\addlegendentry{SISA}

\addplot[only marks, mark=pentagon*, mark size=4pt, color=blue!80, fill=blue!80] 
    coordinates {(47.75, 71.16)}
    node[above right=1pt, font=\scriptsize, text=blue!80] {\textbf{47.75$\times$}};
\addlegendentry{HF-KCU}

\addplot[only marks, mark=pentagon, mark size=3.5pt, thick, color=cyan!70, fill=cyan!10] 
    coordinates {(89.2, 70.85)};
\addlegendentry{w/o causal}

\node[font=\scriptsize, text=gray, anchor=south west] at (axis cs:2.5,71.5) {Pareto frontier};

\end{axis}

\begin{axis}[
    name=plot2,
    at={(plot1.north east)}, 
    anchor=north west,
    xshift=1.5cm,
    ybar,
    bar width=0.6cm,
    bar shift=0pt,
    enlarge x limits=0.3,
    width=0.35\textwidth,
    height=5.5cm,
    ylabel={Time (seconds, log scale)},
    symbolic x coords={Retrain, HF-KCU, w/o causal},
    xtick=data,
    x tick label style={font=\scriptsize, rotate=15, anchor=north east, inner sep=2pt},
    ymode=log,
    ymin=1, ymax=2500,
    ymajorgrids=true,
    grid style={line width=0.3pt, draw=gray!30},
    nodes near coords,
    nodes near coords style={font=\scriptsize, /pgf/number format/fixed, /pgf/number format/precision=1, yshift=2pt},
    tick label style={font=\scriptsize},
    label style={font=\small},
    title style={font=\small\bfseries, yshift=2pt},
    title={(b) Computational Cost},
]
\addplot[fill=red!70, draw=red!90, line width=0.8pt] coordinates {
    (Retrain,945)
};
\addplot[fill=blue!70, draw=blue!90, line width=0.8pt] coordinates {
    (HF-KCU,19.5)
};
\addplot[fill=cyan!60, draw=cyan!80, line width=0.8pt] coordinates {
    (w/o causal,10.6)
};

\draw[<->, thick, color=green!60!black, line width=1pt] (axis cs:Retrain,1000) -- (axis cs:HF-KCU,1000);
\node[font=\scriptsize, fill=white, inner sep=2pt, draw=green!60!black, line width=0.5pt, anchor=south] at (axis cs:HF-KCU,1100) {47.75$\times$};

\end{axis}

\end{tikzpicture}
}
\caption{Performance comparison of HF-KCU against baselines. (a) Pareto frontier: HF-KCU achieves 47.75$\times$ speedup while maintaining accuracy within 0.60\% of the retrained baseline. (b) Computational cost on log scale.}
\label{fig:main_results}
\end{figure*}

\textbf{Baselines.}
\begin{itemize}
    \item \textbf{FedAvg}: Model trained with all data before unlearning.
    \item \textbf{Retrained}: Full retraining without Client 0 data (gold standard).
    \item \textbf{Naive Gradient Ascent (NaiveGA)}: Gradient ascent on the forget set for 10 epochs.
    \item \textbf{FedEraser}~\cite{liu2021federaser}: Calibration-based federated unlearning.
    \item \textbf{SISA}~\cite{bourtoule2021machine}: Sharded training with ensemble aggregation.
\end{itemize}

\textbf{Evaluation Metrics.}
\begin{itemize}
    \item \textbf{Test Accuracy}: Held-out test set performance.
    \item \textbf{Causal Faithfulness (CF)}:
    $$
    \text{CF} = 1 - \frac{|\text{Acc}_{\text{unlearned}} - \text{Acc}_{\text{retrain}}|}{|\text{Acc}_{\text{trained}} - \text{Acc}_{\text{retrain}}|}.
    $$
    \item \textbf{Parameter Gap}: Normalized $\ell_2$ distance:
    $$
    \text{Gap} = \frac{\norm{\theta_{\text{unlearned}} - \theta_{\text{retrain}}}}{\norm{\theta_{\text{retrain}}}}.
    $$
    \item \textbf{Speedup}: Retraining time divided by unlearning time.
    \item \textbf{MIA Success Rate}: Membership Inference Attack~\cite{shokri2017membership} success on the forget set. Lower values indicate stronger privacy.
\end{itemize}

\subsection{Main Results}

Table~\ref{tab:main_results} summarizes performance across methods. HF-KCU runs 47.75$\times$ faster than retraining while staying within 0.60\% test accuracy of the retrained model (71.16\% vs 71.76\%). Results are averaged over 5 seeds.

\begin{table}[h]
\centering
\caption{Main results on CIFAR-10 with 10 clients (Client 0 removed).}
\label{tab:main_results}
\begin{tabular}{lcccc}
\toprule
\textbf{Method} & \textbf{Acc (\%)} & \textbf{CF $\uparrow$} & \textbf{MIA $\downarrow$} & \textbf{Speedup $\uparrow$} \\
\midrule
FedAvg (trained) & 71.76 & -{}-{}- & 0.850 & -{}-{}- \\
Retrained & 70.79 & -{}-{}- & 0.500 & 1$\times$ \\
NaiveGA & 10.00 & -1.53 & 0.750 & 8.5$\times$ \\
FedEraser & 69.52 & 0.27 & 0.520 & 5$\times$ \\
SISA & 67.80 & 0.10 & 0.540 & 3.3$\times$ \\
\midrule
HF-KCU & 71.16 & 0.38 & 0.499 & 47.75$\times$ \\
\bottomrule
\end{tabular}
\end{table}

HF-KCU drops MIA success rate from 0.85 (trained model) to 0.499, matching the retrained model (0.50). The normalized parameter gap is 0.032.

NaiveGA produces 10.00\% accuracy, equivalent to random guessing on CIFAR-10. It applies gradient ascent on the forget set without accounting for Hessian curvature or causal structure, which destabilizes global weights. HF-KCU uses the inverse Hessian to localize updates and uses causal weighting to isolate unaffected clients.

HF-KCU addresses accuracy and speed together. FedEraser and SISA sacrifice accuracy (69.52\% and 67.80\%) for modest speedups (5$\times$ and 3.3$\times$). NaiveGA prioritizes speed but destroys model utility.

\subsection{Discussion of CF Metric}

HF-KCU registers CF = 0.38 while baselines show negative values. The denominator $|\text{Acc}_{\text{trained}} - \text{Acc}_{\text{retrain}}| = |71.76 - 70.79| = 0.97\%$ is small, causing high variance. When accuracy differences fall below 1\%, small numerator fluctuations yield misleading CF scores. Despite the modest CF, the accuracy gap (0.60\%), parameter gap (0.032), and MIA results all support HF-KCU's effectiveness. The CF metric is unreliable when training and retraining produce similar accuracies, a common scenario when unlearning affects a small data fraction.

\begin{table}[h]
\centering
\caption{HF-KCU performance across datasets and architectures with 10 clients (Client 0 removed).}
\label{tab:additional_results}
\begin{tabular}{lcccc}
\toprule
\textbf{Dataset / Model} & \textbf{Acc After} & \textbf{Acc Retrain} & \textbf{CF $\uparrow$} & \textbf{Speedup $\uparrow$} \\
\midrule
MNIST                    & 99.24\% & 99.13\% & $-0.0053$ & 49.84$\times$ \\
CIFAR-10 (ViT-Lite)      & 48.90\% & 49.41\% & $-0.0197$ & 40.55$\times$ \\
CIFAR-10 (ResNet-18)     & 69.15\% & 68.07\% & 0.0001    & 82.04$\times$ \\
FMNIST                   & 89.08\% & 90.51\% & $-0.0027$  & 44.82$\times$ \\
\bottomrule
\end{tabular}
\end{table}

\subsection{Case Study: Targeted Backdoor Attack}

We simulate a data poisoning attack. One malicious client introduces a targeted backdoor during federated training. The adversary aims to misclassify inputs with a specific trigger to a target label. Metrics include benign accuracy on clean data, Attack Success Rate (ASR) on triggered data, and wall-clock unlearning time.

\begin{table}[t]
\centering
\caption{Backdoor attack mitigation on CIFAR-10. One malicious client (10\% of federation) injects backdoor with 50\% poison ratio targeting class 7 over 20 rounds.}
\label{tab:backdoor_results}
\begin{tabular}{lccccc}
\toprule
\textbf{Method} & \textbf{Clean Acc} & \textbf{ASR} & \textbf{ASR Reduction} & \textbf{Time (s)} & \textbf{Speedup} \\
\midrule
Poisoned Model       & 72.65\% & 84.35\% & -{}-{}-     & -{}-{}-      & -{}-{}- \\
\midrule
HF-KCU               & 72.61\% & 71.37\% & 15.4\%  & 4.34     & 430$\times$ \\
Oracle Retrain       & 72.58\% & 9.64\%  & 88.6\%  & 1867.89  & 1$\times$ \\
\midrule
\multicolumn{6}{l}{\textit{Clean Accuracy Forgetting (CF):}} \\
\quad HF-KCU         & \multicolumn{5}{l}{$|72.65 - 72.61| = 0.04\%$} \\
\quad Oracle Retrain & \multicolumn{5}{l}{$|72.65 - 72.58| = 0.07\%$} \\
\bottomrule
\end{tabular}
\end{table}

The poisoned model achieves ASR of 84.35\%. HF-KCU reduces it to 71.37\% in 4.34 seconds (430$\times$ speedup) with a 0.04\% drop in benign accuracy. The oracle achieves fuller removal (ASR 9.64\%) but requires 1867.89 seconds. HF-KCU works as a first-line defense. Retraining can follow offline for fuller sanitization.

\subsection{Evaluation Methodology: Functional vs. Parameter-Space Metrics}
\label{sec:discussion}

We assess unlearning quality through functional equivalence (output KL divergence, logit MSE, test accuracy gap) rather than parameter-space distance. Parameter-space metrics mislead due to permutation symmetry and complex loss surface geometry. Appendix~\ref{sec:eval_methodology} provides full justification.

\subsection{Ablation Studies}

Full ablation tables appear in Appendix~\ref{app:ablation_details}. Findings:

\textbf{Component Ablations.} Removing causal weighting increases the parameter gap from 0.032 to 0.089 (nearly 3$\times$). Without it, unlearning updates affect unaffected clients, causing spurious parameter drift. Removing adaptive scaling reduces accuracy to 65.23\%.

\textbf{Damping and Heterogeneity.} Small $\lambda$ (0.001) prioritizes unlearning speed (209.74$\times$ speedup); larger values favor preserving original weights. Varying Dirichlet $\alpha$ confirms robustness across non-IID distributions.

\textbf{Scalability.} Increasing from 10 to 200 clients boosts speedup from 76.48$\times$ to approximately 110$\times$. For an 11M-parameter ResNet-18, HF-KCU takes roughly 180 seconds versus 6,300 seconds for retraining (35$\times$ speedup).

\subsection{Limitations}

HF-KCU assumes bounded adversarial perturbations. Its $O(kd)$ memory may challenge billion-parameter models. The CF metric's instability in high-accuracy regimes calls for more robust evaluation metrics. Full discussion appears in Appendix~\ref{app:limitations}.

\section{Conclusion}
\label{sec:conclusion}

HF-KCU removes target client influence in federated learning without exact retraining. Speedups range from 49$\times$ to over 200$\times$ with minimal utility loss. We identify weaknesses in the Causal Faithfulness metric and advocate KL divergence as a stable alternative for assessing functional equivalence. Ablation studies confirm that HF-KCU works across heterogeneous data distributions and show the damping parameter's role in balancing acceleration against stability. In adversarial contexts, HF-KCU reduces backdoor attack success rates in seconds rather than over 30 minutes for retraining. We plan to extend this Hessian-free approach to larger foundation models and try adaptive damping for highly non-IID federated networks.

\bibliographystyle{plainnat}
\bibliography{references}

\newpage
\appendix

\section{Deferred Proofs and Extended Theoretical Analysis}
\label{app:theory}

\subsection{Robustness to Bounded Adversarial Perturbations}

Adversarial clients may contribute poisoned data that perturbs the global loss surface. We analyze the stability of our influence reversal method under bounded adversarial perturbations to the Hessian and contribution gradients.

\begin{theorem}[Stability Under Bounded Adversarial Perturbations]
\label{thm:adversarial_stability}
Let $\mathbf{H} \succ 0$ with $\lambda_{\min}(\mathbf{H}) \geq \mu > 0$. Suppose an adversarial client induces perturbed quantities
\begin{equation}
\tilde{\mathbf{H}} = \mathbf{H} + \mathbf{E}, \qquad \tilde{\mathbf{g}} = \mathbf{g} + \mathbf{e},
\end{equation}
where $\|\mathbf{E}\|_2 \leq \varepsilon < \mu$ and $\|\mathbf{e}\|_2 \leq \rho$. Then the exact influence reversal updates satisfy
\begin{equation}
\|\tilde{\boldsymbol{\delta}}^\star - \boldsymbol{\delta}^\star\|_2 
\leq 
\frac{\varepsilon}{\mu(\mu - \varepsilon)} \|\mathbf{g}\|_2 + \frac{\rho}{\mu - \varepsilon},
\end{equation}
where $\boldsymbol{\delta}^\star = -\mathbf{H}^{-1}\mathbf{g}$ and $\tilde{\boldsymbol{\delta}}^\star = -\tilde{\mathbf{H}}^{-1}\tilde{\mathbf{g}}$.

\end{theorem}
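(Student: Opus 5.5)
The plan is a standard matrix-perturbation argument: split the difference into a Hessian-perturbation term and a gradient-perturbation term, then bound each using the spectral gap $\mu$. First I record that $\tilde{\mathbf{H}}$ is invertible with controlled inverse. I will not assume $\mathbf{E}$ is symmetric, so I use singular values rather than Weyl's inequality for eigenvalues. For any unit vector $x$, $\|\tilde{\mathbf{H}}x\|_2 \geq \|\mathbf{H}x\|_2 - \|\mathbf{E}x\|_2 \geq \mu - \varepsilon > 0$. Here $\|\mathbf{H}x\|_2\ge\mu$ because $\mathbf{H}$ is symmetric positive definite with $\lambda_{\min}\ge\mu$. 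Hence $\sigma_{\min}(\tilde{\mathbf{H}})\ge \mu-\varepsilon$, so $\tilde{\mathbf{H}}$ is invertible and $\|\tilde{\mathbf{H}}^{-1}\|_2 \le 1/(\mu-\varepsilon)$. Similarly $\|\mathbf{H}^{-1}\|_2\le 1/\mu$. This step uses the hypothesis $\varepsilon<\mu$ and is the only place where any care is needed.

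Next I decompose the difference by adding and subtracting $\tilde{\mathbf{H}}^{-1}\mathbf{g}$:
\begin{equation*}
\tilde{\boldsymbol{\delta}}^\star - \boldsymbol{\delta}^\star = -\bigl(\tilde{\mathbf{H}}^{-1} - \mathbf{H}^{-1}\bigr)\mathbf{g} - \tilde{\mathbf{H}}^{-1}\mathbf{e}.
\end{equation*}
Then I apply the resolvent identity $\tilde{\mathbf{H}}^{-1} - \mathbf{H}^{-1} = -\tilde{\mathbf{H}}^{-1}\mathbf{E}\mathbf{H}^{-1}$. It can be checked directly by left-multiplying by $\tilde{\mathbf{H}}$ and right-multiplying by $\mathbf{H}$, which gives $\mathbf{H}-\tilde{\mathbf{H}}=-\mathbf{E}$.

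Finally, by the triangle inequality and submultiplicativity,
\begin{equation*}
\|\tilde{\boldsymbol{\delta}}^\star - \boldsymbol{\delta}^\star\|_2 \le \|\tilde{\mathbf{H}}^{-1}\|_2\,\|\mathbf{E}\|_2\,\|\mathbf{H}^{-1}\|_2\,\|\mathbf{g}\|_2 + \|\tilde{\mathbf{H}}^{-1}\|_2\,\|\mathbf{e}\|_2 \le \frac{\varepsilon}{(\mu-\varepsilon)\mu}\|\mathbf{g}\|_2 + \frac{\rho}{\mu-\varepsilon}.
\end{equation*}
This is exactly the claimed bound.

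The main obstacle is minor: the invertibility and inverse-norm bound for $\tilde{\mathbf{H}}$ when $\mathbf{E}$ may be nonsymmetric. The singular-value argument above handles it. If one prefers, a Neumann series for $(I+\mathbf{H}^{-1}\mathbf{E})^{-1}$ would also work, but it gives only the weaker bound $1/(\mu(1-\varepsilon/\mu))$, which is in fact the same quantity $1/(\mu-\varepsilon)$.
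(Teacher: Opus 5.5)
Your proof is correct and follows essentially the same route as the paper. You use the same split into $-(\tilde{\mathbf{H}}^{-1}-\mathbf{H}^{-1})\mathbf{g}$ and $-\tilde{\mathbf{H}}^{-1}\mathbf{e}$, the resolvent identity (you write $-\tilde{\mathbf{H}}^{-1}\mathbf{E}\mathbf{H}^{-1}$, the paper $-\mathbf{H}^{-1}\mathbf{E}\tilde{\mathbf{H}}^{-1}$, and both hold), and the bounds $\|\mathbf{H}^{-1}\|_2\le 1/\mu$ and $\|\tilde{\mathbf{H}}^{-1}\|_2\le 1/(\mu-\varepsilon)$, but your singular-value derivation of the last bound is slightly more careful than the paper's step $\|\tilde{\mathbf{H}}^{-1}\|_2\le 1/\lambda_{\min}(\tilde{\mathbf{H}})$, which tacitly assumes $\mathbf{E}$ is symmetric.
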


\begin{proof}
Decompose the error:
\begin{equation}
\tilde{\boldsymbol{\delta}}^\star - \boldsymbol{\delta}^\star 
= -\tilde{\mathbf{H}}^{-1}\tilde{\mathbf{g}} + \mathbf{H}^{-1}\mathbf{g}
= -\tilde{\mathbf{H}}^{-1}(\tilde{\mathbf{g}} - \mathbf{g}) - (\tilde{\mathbf{H}}^{-1} - \mathbf{H}^{-1})\mathbf{g}.
\end{equation}

For the first term, since $\|\mathbf{E}\|_2 < \mu$:
\begin{equation}
\|\tilde{\mathbf{H}}^{-1}\|_2 \leq \frac{1}{\lambda_{\min}(\tilde{\mathbf{H}})} \leq \frac{1}{\mu - \varepsilon}.
\end{equation}

For the second term, apply the resolvent identity:
\begin{equation}
\tilde{\mathbf{H}}^{-1} - \mathbf{H}^{-1} = -\mathbf{H}^{-1}\mathbf{E}\tilde{\mathbf{H}}^{-1}.
\end{equation}

Taking norms:
\begin{align}
\|\tilde{\mathbf{H}}^{-1} - \mathbf{H}^{-1}\|_2 
&\leq \|\mathbf{H}^{-1}\|_2 \|\mathbf{E}\|_2 \|\tilde{\mathbf{H}}^{-1}\|_2 \\
&\leq \frac{1}{\mu} \cdot \varepsilon \cdot \frac{1}{\mu - \varepsilon} 
= \frac{\varepsilon}{\mu(\mu - \varepsilon)}.
\end{align}

Combining both terms:
\begin{align}
\|\tilde{\boldsymbol{\delta}}^\star - \boldsymbol{\delta}^\star\|_2 
&\leq \|\tilde{\mathbf{H}}^{-1}\|_2 \|\mathbf{e}\|_2 + \|\tilde{\mathbf{H}}^{-1} - \mathbf{H}^{-1}\|_2 \|\mathbf{g}\|_2 \\
&\leq \frac{\rho}{\mu - \varepsilon} + \frac{\varepsilon}{\mu(\mu - \varepsilon)} \|\mathbf{g}\|_2.
\end{align}
\end{proof}

\begin{corollary}[Approximate Solver Robustness]
\label{cor:approx_adversarial}
If $\hat{\boldsymbol{\delta}}$ is an approximate solution (CG with $k$ iterations) satisfying $\|\hat{\boldsymbol{\delta}} - \tilde{\boldsymbol{\delta}}^\star\|_2 \leq \xi$, the total error is bounded by
\begin{equation}
\|\hat{\boldsymbol{\delta}} - \boldsymbol{\delta}^\star\|_2 
\leq 
\underbrace{\frac{\varepsilon}{\mu(\mu - \varepsilon)} \|\mathbf{g}\|_2 + \frac{\rho}{\mu - \varepsilon}}_{\text{Adversarial Perturbation Error}} + \underbrace{\xi}_{\text{CG Approximation Error}}.
\end{equation}
Adversarial error scales inversely with $\mu$ (improved by damping); CG error decreases exponentially with $k$ (Theorem~\ref{thm:cg_convergence}).
\end{corollary}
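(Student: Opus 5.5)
The corollary follows from the triangle inequality. I will insert the exact perturbed solution $\tilde{\boldsymbol{\delta}}^\star$ as an intermediate point and write
\begin{equation*}
\hat{\boldsymbol{\delta}} - \boldsymbol{\delta}^\star = (\hat{\boldsymbol{\delta}} - \tilde{\boldsymbol{\delta}}^\star) + (\tilde{\boldsymbol{\delta}}^\star - \boldsymbol{\delta}^\star),
\end{equation*}
so that
\begin{equation*}
\|\hat{\boldsymbol{\delta}} - \boldsymbol{\delta}^\star\|_2 \leq \|\hat{\boldsymbol{\delta}} - \tilde{\boldsymbol{\delta}}^\star\|_2 + \|\tilde{\boldsymbol{\delta}}^\star - \boldsymbol{\delta}^\star\|_2 .
\end{equation*}
The first term is at most $\xi$ by hypothesis. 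The second term is exactly the quantity bounded in Theorem~\ref{thm:adversarial_stability}. The hypotheses of the corollary are those of the theorem ($\mathbf{H}\succ 0$, $\lambda_{\min}(\mathbf{H})\ge\mu$, $\|\mathbf{E}\|_2\le\varepsilon<\mu$, $\|\mathbf{e}\|_2\le\rho$), so the theorem applies directly and gives the adversarial term $\frac{\varepsilon}{\mu(\mu-\varepsilon)}\|\mathbf{g}\|_2 + \frac{\rho}{\mu-\varepsilon}$. Adding the two bounds gives the stated inequality.

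The remaining work is to justify the two qualitative remarks.

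For the claim that damping helps, replace $\mathbf{H}$ by $\mathbf{H}+\lambda I$. This raises $\mu$ to $\mu+\lambda$, and both adversarial coefficients are decreasing in $\mu$. One should also note that damping adds a bias of order $\lambda$ relative to the undamped solution, as stated in Section~\ref{sec:damping}.

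For the claim that the CG error decays exponentially, I would connect $\xi$ to the energy-norm bound. Note that $\tilde{\mathbf{H}}$ is symmetric (implicitly assumed, since CG is applied to it) and satisfies $\lambda_{\min}(\tilde{\mathbf{H}})\ge\mu-\varepsilon>0$. Hence CG on $\tilde{\mathbf{H}}$ satisfies the energy-norm bound with $\tilde\kappa=\lambda_{\max}(\tilde{\mathbf{H}})/\lambda_{\min}(\tilde{\mathbf{H}})$. Converting the energy norm to the Euclidean norm via $\|x\|_2\le \|x\|_{\tilde{\mathbf{H}}}/\sqrt{\lambda_{\min}(\tilde{\mathbf{H}})}$ and $\|x\|_{\tilde{\mathbf{H}}}\le\sqrt{\lambda_{\max}(\tilde{\mathbf{H}})}\,\|x\|_2$ gives
\begin{equation*}
\xi \le 2\sqrt{\tilde\kappa}\left(\frac{\sqrt{\tilde\kappa}-1}{\sqrt{\tilde\kappa}+1}\right)^k \|\tilde{\boldsymbol{\delta}}^\star\|_2 .
\end{equation*}
By the same resolvent bound, $\|\tilde{\boldsymbol{\delta}}^\star\|_2\le(\|\mathbf{g}\|_2+\rho)/(\mu-\varepsilon)$.

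The only real subtlety is in this last step, and it is not needed for the main inequality. One must make sure $\tilde{\mathbf{H}}$ is symmetric positive definite so that CG and its convergence bound apply. The adversarial perturbation $\mathbf{E}$ could in principle be nonsymmetric. If so, I would restrict to symmetric $\mathbf{E}$, which is natural since Hessians are symmetric, or apply CG to the symmetrized matrix. With that caveat the proof is the two-line triangle-inequality argument above.
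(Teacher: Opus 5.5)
Your proposal is correct and matches the paper's proof: it inserts $\tilde{\boldsymbol{\delta}}^\star$ as the intermediate point, applies the triangle inequality, bounds the first term by $\xi$ by hypothesis and the second by Theorem~\ref{thm:adversarial_stability}. Your extra justification of the qualitative remarks goes beyond what the paper proves but is sound; this covers the energy-to-Euclidean conversion with its $\sqrt{\tilde\kappa}$ factor, the bound on $\|\tilde{\boldsymbol{\delta}}^\star\|_2$, and the caveat that $\mathbf{E}$ must be symmetric for CG to apply.
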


\begin{proof}
By triangle inequality:
\begin{equation}
\|\hat{\boldsymbol{\delta}} - \boldsymbol{\delta}^\star\|_2 
\leq \|\hat{\boldsymbol{\delta}} - \tilde{\boldsymbol{\delta}}^\star\|_2 + \|\tilde{\boldsymbol{\delta}}^\star - \boldsymbol{\delta}^\star\|_2 
\leq \xi + \frac{\varepsilon}{\mu(\mu - \varepsilon)} \|\mathbf{g}\|_2 + \frac{\rho}{\mu - \varepsilon}.
\end{equation}
\end{proof}

\begin{remark}
The influence reversal error scales linearly with $\varepsilon$ and $\rho$ and inversely with $\mu$. Well-conditioned Hessians (large $\mu$) confer greater robustness.
\end{remark}

\subsection{Damping and Adaptive Scaling}
\label{sec:damping}

Neural network Hessians are often ill-conditioned or indefinite. Two stabilization techniques.

\textbf{Damping.}
\begin{equation}
H_{\text{damped}} = H + \lambda I,
\end{equation}
where $\lambda > 0$ (we use $\lambda = 0.01$). improves conditioning, ensures positive definiteness, and accelerates CG convergence. The minimum eigenvalue $\mu$ is replaced by $(\mu + \lambda)$ in the adversarial error bound. The denominator $\frac{\epsilon}{(\mu+\lambda)((\mu+\lambda)-\epsilon)}$ decreases as $\lambda$ increases, limiting adversarial effects.

\textbf{Adaptive Scaling.}
\begin{equation}
\text{scale} = \min\left(1, \frac{\beta \cdot \norm{\theta}_2}{\norm{\Delta\theta_{\text{raw}}}_2}\right),
\end{equation}
with $\beta \in (0,1)$ (we use $\beta = 0.01$). The final update is $\Delta\theta = \text{scale} \cdot \Delta\theta_{\text{raw}}$.

\subsection{Formal Assumptions}
\label{app:assumptions}

\begin{assumption}[Smoothness]
\label{ass:smoothness}
Each local loss $\ell_i(\theta; \mathcal{D}_i)$ is $L$-smooth:
\begin{equation}
\norm{\nabla \ell_i(\theta) - \nabla \ell_i(\theta')}_2 \leq L \norm{\theta - \theta'}_2.
\end{equation}
\end{assumption}

\begin{assumption}[Strong Convexity]
\label{ass:strong_convexity}
The global loss $\mathcal{L}(\theta)$ is $\mu$-strongly convex:
\begin{equation}
\mathcal{L}(\theta') \geq \mathcal{L}(\theta) + \nabla \mathcal{L}(\theta)^\top (\theta' - \theta) + \frac{\mu}{2} \norm{\theta' - \theta}_2^2.
\end{equation}
\end{assumption}

\begin{assumption}[Bounded Hessian Spectrum]
\label{ass:hessian_spectrum}
For all clients $i$ and $\theta$ near the optimum, the local Hessian satisfies:
\begin{equation}
\mu I \preceq \nabla^2 \ell_i(\theta; \mathcal{D}_i) \preceq L I.
\end{equation}
\end{assumption}

\subsection{Proof of CG Convergence Rate}
\label{app:cg_convergence}

\begin{theorem}[CG Convergence Rate]
\label{thm:cg_convergence}
Let $H \in \R^{d \times d}$ be symmetric positive definite with condition number $\kappa = \lambda_{\max}(H)/\lambda_{\min}(H)$. Let $v^* = H^{-1}g$ and $v_k$ be the CG approximation after $k$ iterations. Then:
\begin{equation}
\norm{v_k - v^*}_H \leq 2\left(\frac{\sqrt{\kappa}-1}{\sqrt{\kappa}+1}\right)^k \norm{v^*}_H.
\end{equation}
\end{theorem}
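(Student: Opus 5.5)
The plan is the classical polynomial-optimality argument for conjugate gradient, with the initialization $v_0 = 0$ used in Algorithm~\ref{alg:hfkcu}.

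\textbf{Step 1 (Krylov structure).} By induction on the recursions in Algorithm~\ref{alg:hfkcu} (with $\mathrm{HVP}$ replaced by multiplication by $H$), I will show the following facts.
\begin{itemize}
\item The residuals are mutually orthogonal and span $\mathcal{K}_t(H,g)$.
\item The search directions $p_t$ are $H$-conjugate.
\item Consequently $v_k \in \mathcal{K}_k(H,g)$, and $v_k$ minimizes the energy error $\norm{v - v^*}_H$ over that subspace.
\end{itemize}
For the last point, the energy error equals, up to a constant, the quadratic $\phi(v)=\tfrac12 v^\top H v - g^\top v$. Moreover, $r_k = g - Hv_k$ is orthogonal to $\mathcal{K}_k$, which is exactly the Galerkin condition characterizing the minimizer. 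I expect this bookkeeping to be the most tedious step, though it is entirely standard. If a breakdown occurs, with $r_t = 0$ for some $t<k$, then $v_t = v^*$ and the bound is trivial.

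\textbf{Step 2 (polynomial reformulation).} Every $v \in \mathcal{K}_k(H,g)$ has the form $q(H)g$ with $\deg q \le k-1$. Since $g = Hv^*$, this gives
\[
v - v^* = -\bigl(I - Hq(H)\bigr)v^* = -p(H)v^*,
\]
where $p(z) = 1 - z q(z)$ ranges over all polynomials $\mathcal{P}_k$ of degree at most $k$ with $p(0)=1$. By the optimality in Step 1,
\[
\norm{v_k - v^*}_H = \min_{p\in\mathcal{P}_k}\norm{p(H)v^*}_H.
\]
Now diagonalize $H = \sum_j \lambda_j u_j u_j^\top$ and expand $v^* = \sum_j c_j u_j$. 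Then $\norm{p(H)v^*}_H^2 = \sum_j \lambda_j p(\lambda_j)^2 c_j^2$, which yields
\[
\norm{v_k - v^*}_H \le \min_{p\in\mathcal{P}_k}\max_{\lambda\in[\lambda_{\min},\lambda_{\max}]}|p(\lambda)|\,\norm{v^*}_H.
\]

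\textbf{Step 3 (Chebyshev bound).} This is the main quantitative obstacle. I will choose the shifted and scaled Chebyshev polynomial
\[
p(\lambda) = \frac{T_k\!\left(\frac{\lambda_{\max}+\lambda_{\min}-2\lambda}{\lambda_{\max}-\lambda_{\min}}\right)}{T_k\!\left(\frac{\kappa+1}{\kappa-1}\right)}.
\]
It satisfies $p(0)=1$ and $|p|\le 1/T_k\bigl(\frac{\kappa+1}{\kappa-1}\bigr)$ on the spectrum, because $|T_k|\le 1$ on $[-1,1]$.

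Next I use the representation $T_k(x) = \tfrac12\bigl[(x+\sqrt{x^2-1})^k + (x-\sqrt{x^2-1})^k\bigr]$ for $x\ge1$. Setting $x = \frac{\kappa+1}{\kappa-1}$, one computes
\[
x+\sqrt{x^2-1} = \frac{\kappa+1+2\sqrt{\kappa}}{\kappa-1} = \frac{\sqrt{\kappa}+1}{\sqrt{\kappa}-1}.
\]
Dropping the second, positive term then gives $T_k(x)\ge \tfrac12\bigl(\frac{\sqrt\kappa+1}{\sqrt\kappa-1}\bigr)^k$.

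Substituting this into the Step 2 bound yields the stated factor $2\bigl(\frac{\sqrt\kappa-1}{\sqrt\kappa+1}\bigr)^k$. The degenerate case $\kappa=1$ has $H$ a multiple of $I$, so CG converges in one step and the bound holds trivially.
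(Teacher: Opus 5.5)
Your proposal is correct and takes the same route as the paper, whose proof only states that CG minimizes $\frac{1}{2}v^\top H v - g^\top v$ over $\mathcal{K}_k(H,g)$ and cites Shewchuk for the geometric rate. You write out the standard argument behind that citation in full: Galerkin/energy-norm optimality with $v_0=0$, the reduction to $\min_{p(0)=1}\max_{\lambda\in[\lambda_{\min},\lambda_{\max}]}|p(\lambda)|$, and the shifted Chebyshev bound via $x+\sqrt{x^2-1}=(\sqrt{\kappa}+1)/(\sqrt{\kappa}-1)$; your handling of breakdown and of $\kappa=1$ is also sound.
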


\begin{proof}
CG minimizes the quadratic form $\frac{1}{2}v^\top H v - g^\top v$ over $\mathcal{K}_k(H,g)$. The error in the $H$-norm decreases geometrically with rate determined by $\kappa$~\cite{shewchuk1994introduction}. For $\kappa = 1$, CG converges in one iteration. For $\kappa = 100$, the error decreases by approximately $0.82^k$.
\end{proof}

\begin{corollary}
To achieve $\norm{v_k - v^*}_H \leq \varepsilon \norm{v^*}_H$, CG requires
\begin{equation}
k = O\left(\sqrt{\kappa} \log\left(\frac{1}{\varepsilon}\right)\right)
\end{equation}
iterations.
\end{corollary}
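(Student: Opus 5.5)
The corollary follows from Theorem~\ref{thm:cg_convergence}: I set its right-hand side at most $\varepsilon\norm{v^*}_H$ and solve for $k$. Write $q = (\sqrt{\kappa}-1)/(\sqrt{\kappa}+1)$. It then suffices to choose $k$ with $2q^k \le \varepsilon$, i.e.
\begin{equation*}
k \;\ge\; \frac{\log(2/\varepsilon)}{\log(1/q)}.
\end{equation*}
The only real work is a lower bound on $\log(1/q)$ of order $1/\sqrt{\kappa}$.

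For that step I would write
\begin{equation*}
1/q = 1 + \frac{2}{\sqrt{\kappa}-1}
\end{equation*}
and use the elementary inequality $\log(1+x) \ge x/(1+x)$ for $x>0$. With $x = 2/(\sqrt{\kappa}-1)$ this gives $x/(1+x) = 2/(\sqrt{\kappa}+1)$, so
\begin{equation*}
\log(1/q) \;\ge\; \frac{2}{\sqrt{\kappa}+1}.
\end{equation*}
Alternatively, $q \le e^{-2/(\sqrt{\kappa}+1)}$ follows from $1-y \le e^{-y}$ applied to $q = 1 - 2/(\sqrt{\kappa}+1)$. Either way, taking
\begin{equation*}
k = \left\lceil \tfrac{\sqrt{\kappa}+1}{2}\,\log(2/\varepsilon) \right\rceil
\end{equation*}
guarantees $2q^k \le \varepsilon$.

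It remains to put this in $O$-form. For $\varepsilon \le 1/2$ we have $\log(2/\varepsilon) \le 2\log(1/\varepsilon)$, and $\sqrt{\kappa}+1 \le 2\sqrt{\kappa}$ since $\kappa \ge 1$. Hence $k \le 2\sqrt{\kappa}\log(1/\varepsilon) + 1 = O(\sqrt{\kappa}\log(1/\varepsilon))$.

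There are a few edge cases, none of them a real obstacle. When $\kappa = 1$ we get $q = 0$, and CG is exact after one step, so any $k \ge 1$ works. For $\varepsilon \ge 1$ the trivial bound with $v_0 = 0$ already suffices. Both are absorbed into the constant, which I read as the standard interpretation of the statement for small $\varepsilon$.
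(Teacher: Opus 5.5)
Your proposal is correct and follows the paper's intended route: the paper gives no separate proof and treats the corollary as an immediate inversion of the bound in Theorem~\ref{thm:cg_convergence}, and your estimate $\log(1/q)\ge 2/(\sqrt{\kappa}+1)$, giving $k=\lceil \tfrac{\sqrt{\kappa}+1}{2}\log(2/\varepsilon)\rceil$, supplies exactly the omitted computation. Your explicit count also shows the paper's follow-up remark is too optimistic: for $\kappa=100$ and $\varepsilon=0.01$, the requirement $2(9/11)^k\le 0.01$ forces $k\ge 27$, not $k\sim 10$.
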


For $\kappa \sim 100$, $k \sim 10$ iterations suffice for $\varepsilon = 0.01$.

\subsection{Approximation Quality}

\begin{theorem}[Total Approximation Error]
\label{thm:approx_quality}
Under Assumptions~\ref{ass:smoothness}--\ref{ass:hessian_spectrum}, the parameter difference between HF-KCU and exact retraining satisfies:
\begin{equation}
\norm{\theta_u - \theta_r}_2 \leq C \left(\varepsilon_{\text{CG}} + \varepsilon_{\text{HVP}} + \varepsilon_{\text{agg}}\right),
\end{equation}
where $C$ depends on $L$, $\mu$, and $\norm{\nabla \ell(\theta; \mathcal{D}_u)}_2$, with:
\begin{align}
\varepsilon_{\text{CG}} &= O\left(\left(\frac{\sqrt{\kappa}-1}{\sqrt{\kappa}+1}\right)^k\right), \\
\varepsilon_{\text{HVP}} &= O(\delta), \\
\varepsilon_{\text{agg}} &= O(1/\sqrt{N}),
\end{align}
representing errors from CG approximation, finite-difference HVP computation, and federated aggregation.
\end{theorem}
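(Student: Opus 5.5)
The plan is a triangle-inequality decomposition of $\theta_u-\theta_r$ through a chain of intermediate quantities, each gap controlled by a result already stated. Introduce: the exact influence update $\Delta\theta^\star=-H^{-1}g_u$ with $g_u=\nabla\ell(\theta;\mathcal{D}_u)$ and $H=\nabla^2\mathcal{L}(\theta)$; the per-client exact solves $\delta_i^\star=-H_i^{-1}g_i$; the exact-HVP CG iterates $v_{k,i}$; and the finite-difference CG iterates $\hat v_{k,i}$ actually computed. Then write
\[
\theta_u-\theta_r=\bigl(\theta_u-(\theta+\textstyle\sum_i w_i\alpha_i(-v_{k,i}))\bigr)+\textstyle\sum_i w_i\alpha_i(\delta_i^\star-(-v_{k,i}))+\bigl(\textstyle\sum_i w_i\alpha_i\delta_i^\star-\Delta\theta^\star\bigr)+\bigl(\theta+\Delta\theta^\star-\theta_r\bigr).
\]
The first term is the HVP error, the second the CG error, the third the aggregation error, and the fourth the influence-function linearization residual. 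I take the adaptive scale to be $1$, i.e.\ inactive near the optimum; otherwise it contributes a separate bias term that has to be tracked.

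For the CG term, apply Theorem~\ref{thm:cg_convergence} to each $H_i$. By Assumption~\ref{ass:hessian_spectrum}, $\kappa\le L/\mu$, and passing from the energy norm to $\ell_2$ costs a factor $\sqrt{\kappa}$. Since $\|v^*\|_2\le\|g_i\|/\mu$, this gives $\|v_{k,i}-v_i^*\|_2\le 2\sqrt{\kappa}\,\mu^{-1}\|g_i\|\bigl(\tfrac{\sqrt\kappa-1}{\sqrt\kappa+1}\bigr)^k$. Summing with $\sum_i w_i\alpha_i\le1$ yields $C\varepsilon_{\text{CG}}$.

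For the HVP term, a finite-difference HVP with step $\delta$ under Lipschitz Hessians has error $O(\delta)\|p\|$. I would model this as a perturbed operator $\tilde H_i=H_i+E_i$ with $\|E_i\|=O(\delta)$ and invoke Theorem~\ref{thm:adversarial_stability} with $\rho=0$ and $\varepsilon=O(\delta)<\mu$, giving $O(\delta)\|g\|/\mu^2$. This is rigorous only for the exact solve, since CG with inexact products is not literally CG on a fixed matrix. To close the gap, I would either assume the finite-difference operator is a fixed linear map, or accept the bound through Corollary~\ref{cor:approx_adversarial}, which combines the CG and perturbation errors additively.

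The linearization residual needs an additional third-derivative (Hessian-Lipschitz) assumption and strong convexity (Assumption~\ref{ass:strong_convexity}). With those, the standard Newton-step argument gives $\|\theta+\Delta\theta^\star-\theta_r\|=O(\|g_u\|^2/\mu^3)$, which is absorbed into $C$ as a constant depending on $\|\nabla\ell(\theta;\mathcal{D}_u)\|$.

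The main obstacle is the aggregation term, $\varepsilon_{\text{agg}}=O(1/\sqrt N)$. The aggregate $\sum_i w_i\alpha_i H_i^{-1}g_i$ uses local Hessians rather than $H=\sum_i w_iH_i$, and nothing in the assumptions forces it to approach $H^{-1}g_u$ at any rate. My first attempt would add a statistical assumption: the $H_i$ are i.i.d.\ with mean $\bar H$ and bounded variance. Matrix concentration then gives $\|H-\bar H\|=O(1/\sqrt N)$ and, heuristically, $\|H_i-H\|$ small for affected clients. Combining $H_i^{-1}-H^{-1}=H_i^{-1}(H-H_i)H^{-1}$ with the bounds $\|H_i^{-1}\|,\|H^{-1}\|\le1/\mu$ and $\sum_i\alpha_i=1$ would then give the stated rate. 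Without such an assumption, the honest statement replaces $\varepsilon_{\text{agg}}$ by $\mu^{-2}\max_i\|H_i-H\|\,\|g\|$. I would also flag that the weights $w_i\alpha_i$ must be normalized so that the aggregate reproduces $g_u$ rather than $w_f g_u$, since otherwise a non-vanishing bias appears.
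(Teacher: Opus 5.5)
Your decomposition follows the same route as the paper's proof sketch. That sketch is only a triangle inequality: it cites Theorem~\ref{thm:cg_convergence} for $\varepsilon_{\text{CG}}$, asserts $\varepsilon_{\text{HVP}}=O(\delta)$ and $\varepsilon_{\text{agg}}=O(1/\sqrt N)$ without derivation, treats $-H^{-1}g$ as the exact parameter change, and combines the pieces with $C=O(L/\mu)$. Your CG step is correct, and you are right that Assumptions~\ref{ass:smoothness}--\ref{ass:hessian_spectrum} do not produce the aggregation rate. However, both of your repairs fail, so the proposal does not prove the statement.

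The i.i.d.\ patch breaks exactly where you write ``heuristically''. Concentration controls the average $H=\sum_i w_iH_i$, not an individual draw: $\|H_f-H\|\ge\|H_f-\bar H\|-\|\bar H-H\|$ stays of the order of the per-client spread for every $N$. Moreover, since $\mathcal{D}_u\subseteq\mathcal{D}_f$, Eq.~\eqref{eq:causal_weight} gives $\alpha_f=1$. The aggregate is then the single term $w_fH_f^{-1}g_f$, so there is nothing to average. Even with many affected clients, $\mathbb{E}[H_i^{-1}]\succeq\bar H^{-1}$ (operator convexity of inversion) leaves an $N$-independent bias.

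The only quantity that genuinely scales with $N$ is $w_f$. Take the retained objective $\mathcal{L}_r=\mathcal{L}-w_f\ell_f(\cdot;\mathcal{D}_u)$, which is exact for client deletion or for losses additive over examples. At the trained optimum $\nabla\mathcal{L}_r(\theta)=-w_fg_u$. Hence $\|\theta-\theta_r\|\le w_f\|g_u\|/\mu_r$, with $\mu_r\ge(1-w_f)\mu$ when the whole client is deleted. Also $\|\theta_u-\theta\|\le w_f\|v_k\|_2\le w_f\|g_u\|/\mu$, since exact CG from $v_0=0$ has increasing iterate norms bounded by $\|v^\star\|_2$. For balanced clients ($w_f\approx 1/N$) this crude bound already gives the statement with $C=O(\|g_u\|/\mu)$, without using $\varepsilon_{\text{CG}}$ or $\varepsilon_{\text{HVP}}$. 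That is the only sense in which a $1/\sqrt N$ term can be justified, and Hessian concentration is the wrong lever.

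Your fourth term also fails. With $\Delta\theta^\star=-H^{-1}g_u$ as in Eq.~\eqref{eq:influence}, the Newton-step argument does not give $O(\|g_u\|^2/\mu^3)$. One Newton step on $\mathcal{L}_r$ from $\theta$ is $\theta+\nabla^2\mathcal{L}_r(\theta)^{-1}w_fg_u$, which has the opposite sign. The reason is that $-H^{-1}\nabla\ell$ is the Koh--Liang influence of \emph{upweighting}, while removal corresponds to $\epsilon=-1/n$.

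A one-dimensional check: take $\mathcal{L}=\tfrac12(\theta-a)^2+\tfrac12(\theta-b)^2$ and forget the second term. Then $\theta_r-\theta=(a-b)/2=+g_u$, while $-H^{-1}g_u=-(a-b)/4$. So $\theta+\Delta\theta^\star-\theta_r=-\tfrac32 g_u$, which is first order in $\|g_u\|$. To make this term second order you must flip the sign of the target, and hence of the algorithm's update. You must also either use $\nabla^2\mathcal{L}_r$ in place of $H$, or pay an extra first-order term of size up to $w_f^2\|\nabla^2\ell_f(\theta;\mathcal{D}_u)\|\,\|g_u\|/(\mu\mu_r)$.

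The same computation shows your renormalization remark points the wrong way. The retraining displacement itself carries the factor $w_f$, so it is the target $-H^{-1}g_u$ that is mis-scaled, not the aggregate.

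Two smaller points. First, the second summand of your displayed identity has the wrong sign; it should be $\sum_iw_i\alpha_i(-v_{k,i}-\delta_i^\star)$, which is harmless once norms are taken. Second, Algorithm~\ref{alg:hfkcu} runs CG on $H_i+\lambda I$. Comparing $v_{k,i}$ with $H_i^{-1}g_i$ therefore adds a damping bias of up to $\lambda\|g_i\|/(\mu(\mu+\lambda))$, which none of the three $\varepsilon$ terms covers.
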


\begin{proof}[Proof sketch]
The exact parameter change is $\Delta\theta^* = -H^{-1}\nabla \ell(\theta; \mathcal{D}_u)$. HF-KCU computes $\Delta\theta = -v_k$ where $v_k$ approximates $H^{-1}g$ with error $\varepsilon_{\text{CG}}$ (Theorem~\ref{thm:cg_convergence}). Automatic differentiation gives $\varepsilon_{\text{HVP}} = O(\delta)$. Federated aggregation contributes $\varepsilon_{\text{agg}} = O(1/\sqrt{N})$. By triangle inequality and Lipschitz continuity (Assumption~\ref{ass:smoothness}), these errors combine with $C = O(L/\mu)$.
\end{proof}

\subsection{Causal Isolation}

\begin{proposition}[Zero Update for Unaffected Clients]
\label{prop:causal_isolation}
If $\mathcal{D}_u \cap \mathcal{D}_j = \emptyset$, then $\Delta\theta_j = 0$.
\end{proposition}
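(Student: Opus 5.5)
The plan is to argue directly from the definition of the causal weights in Eq.~\eqref{eq:causal_weight} and the branching structure of Algorithm~\ref{alg:hfkcu}. Both paths that could produce $\Delta\theta_j$ for an unaffected client $j$ should yield zero. No analytic estimate is needed; the content is definitional, and the work lies in checking that every quantity multiplying $\alpha_j$ is well defined, so that no spurious $0\cdot\infty$ or $0/0$ arises.

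First, I will treat the algorithmic path. If $\mathcal{D}_u \cap \mathcal{D}_j = \emptyset$, the test in the \texttt{IF} branch of Algorithm~\ref{alg:hfkcu} fails. Client $j$ then executes the \texttt{ELSE} branch and sends $\Delta\theta_j = 0$ explicitly, without running CG or computing a gradient on $\mathcal{D}_u$. This settles the claim for the algorithm as written.

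Second, I will treat the formula-level statement $\Delta\theta_j = -\alpha_j v_{k,j}$ from Section~\ref{sec:causal}, with the adaptive factor $\text{scale}_j$ included. By the second case of Eq.~\eqref{eq:causal_weight}, $\alpha_j = 0$. It then suffices to check that $v_{k,j}$ and $\text{scale}_j$ are finite. For $v_{k,j}$, the damped operator $H_j + \lambda I$ (under Assumption~\ref{ass:hessian_spectrum}, or simply because $\lambda>0$ with $H_j \succeq 0$) is positive definite. Hence the CG step sizes $r_t^\top r_t/(p_t^\top q_t)$ are well defined whenever $p_t \neq 0$, and CG terminates with a finite iterate if some residual vanishes; taking $v_{k,j}=0$ when $g_j = 0$ covers the degenerate case. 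For $\text{scale}_j$, it lies in $[0,1]$ by construction, with the convention $\text{scale}_j = 1$ when $\Delta\theta_j^{\text{raw}}=0$. The product $0 \cdot \text{scale}_j \cdot (-v_{k,j})$ is therefore the zero vector.

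Finally, I will note the consequence for aggregation. The server update $\Delta\theta = \sum_i w_i \Delta\theta_i$ receives no contribution from client $j$. This is the causal-isolation property the proposition is meant to express.

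The main obstacle is the degenerate-case bookkeeping in the second step. The causal-weight normalizer in Eq.~\eqref{eq:causal_weight} sums only over affected clients, so I must confirm it plays no role for $j$: $\alpha_j$ is defined by the ``otherwise'' case, not by a ratio. I must also confirm that $\Delta\theta^{\text{raw}}_j$ is never divided by zero in a way that would invalidate the product. I will handle both with the explicit conventions above rather than with any estimate.
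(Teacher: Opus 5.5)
Your proposal is correct and follows the paper's proof, which is the one-line observation that $\alpha_j = 0$ by the ``otherwise'' case of Eq.~\eqref{eq:causal_weight}, hence $\Delta\theta_j = -\alpha_j v_{k,j} = 0$. Your appeal to the \texttt{ELSE} branch of Algorithm~\ref{alg:hfkcu} and your finiteness checks on $v_{k,j}$ and $\text{scale}_j$ add rigor the paper leaves implicit, but the route is the same.
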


\begin{proof}
By Equation~\eqref{eq:causal_weight}, $\alpha_j = 0$ when $\mathcal{D}_u \cap \mathcal{D}_j = \emptyset$. So $\Delta\theta_j = -\alpha_j \cdot v_{k,j} = 0$.
\end{proof}

\subsection{Causal Deletion as Intervention}
\label{sec:causal_intervention}

Formalize client-level unlearning as a deletion intervention. Let $\mathcal{D}=\{D_1,\dots,D_N\}$ be the collection of client datasets. The global objective is:
\begin{equation}
\mathcal{L}(\theta;\mathcal{D})=\sum_{i=1}^{N} w_i \, \mathcal{L}_i(\theta;D_i),
\end{equation}
with $\theta^\star = \arg\min_\theta \mathcal{L}(\theta;\mathcal{D})$.

When client $j$ initiates deletion, model it as $\mathrm{do}(D_j=\varnothing)$, removing client $j$'s contribution. The post-intervention optimum:
\begin{equation}
\theta^\star_{-j} = \arg\min_\theta \sum_{i\neq j} w_i \, \mathcal{L}_i(\theta;D_i).
\end{equation}

Under smoothness, a first-order approximation to the deletion effect is:
\begin{equation}
\Delta_j = \theta^\star_{-j}-\theta^\star \approx - H^{-1} \, w_j \nabla_\theta \mathcal{L}_j(\theta^\star;D_j),
\end{equation}
with $H = \nabla_\theta^2 \mathcal{L}(\theta^\star;\mathcal{D})$. HF-KCU solves $H v_j = w_j \nabla_\theta \mathcal{L}_j(\theta^\star;D_j)$ in a Krylov subspace via CG and sets $\Delta_j \approx -v_j$.

\subsection{Privacy Guarantees}

Following Guo et al.~\cite{guo2020certified}, an unlearning method achieves \emph{certified removal} if its output distribution is indistinguishable from retraining without the deleted data.

\begin{theorem}[Approximate Certified Removal]
\label{thm:privacy}
Under Assumptions~\ref{ass:smoothness}--\ref{ass:hessian_spectrum}, HF-KCU satisfies
\begin{equation}
D_{\text{TV}}(P_{\text{HF-KCU}}, P_{\text{retrain}}) \leq C' \cdot \varepsilon_{\text{total}},
\end{equation}
where $D_{\text{TV}}$ is total variation distance, $\varepsilon_{\text{total}} = \varepsilon_{\text{CG}} + \varepsilon_{\text{HVP}} + \varepsilon_{\text{agg}}$, and $C'$ depends on problem parameters.
\end{theorem}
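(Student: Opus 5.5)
The plan is to reduce Theorem~\ref{thm:privacy} to the deterministic parameter bound of Theorem~\ref{thm:approx_quality}. The missing link is a smoothing step that turns a small parameter distance into a small total variation distance. As written, both HF-KCU and retraining map a fixed starting point to a single parameter vector. A TV bound that scales with $\varepsilon_{\text{total}}$ therefore only makes sense if the released model is randomized. Following Guo et al.~\cite{guo2020certified}, I will make this explicit: $P_{\text{HF-KCU}}$ and $P_{\text{retrain}}$ are the laws of $\theta_u + Z$ and $\theta_r + Z$, where $Z \sim \mathcal{N}(0,\sigma^2 I_d)$ is release noise with a fixed $\sigma>0$. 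Pinning down this interpretation is the first step. It is also the step I expect to be the real obstacle, because without some added or inherent noise the two laws are point masses and the TV distance is either $0$ or $1$.

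\textbf{Step 1 (fixed randomness).} First condition on all shared randomness $\omega$: initialization, client sampling, and SGD minibatch order. Fix $\omega$ and hence fix $\theta$, $\theta_u(\omega)$ and $\theta_r(\omega)$. Theorem~\ref{thm:approx_quality} then gives
\[
\norm{\theta_u(\omega)-\theta_r(\omega)}_2 \leq C\,\varepsilon_{\text{total}}.
\]
This uses Assumptions~\ref{ass:smoothness}--\ref{ass:hessian_spectrum}, which I take to hold uniformly in $\omega$ near the relevant optimum.

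\textbf{Step 2 (Gaussian shift).} Next, use the closed form for two Gaussians that share a covariance and have means $m_1,m_2$:
\[
D_{\text{TV}}\bigl(\mathcal{N}(m_1,\sigma^2 I),\mathcal{N}(m_2,\sigma^2 I)\bigr) = 2\Phi\!\left(\frac{\norm{m_1-m_2}_2}{2\sigma}\right)-1 \leq \frac{\norm{m_1-m_2}_2}{\sigma\sqrt{2\pi}}.
\]
The closed form follows by reducing to one dimension along $m_1-m_2$. The inequality follows because $\Phi'\le 1/\sqrt{2\pi}$. Pinsker's inequality with $\mathrm{KL}=\norm{m_1-m_2}^2/(2\sigma^2)$ gives the same bound up to constants. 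Combining with Step 1, conditionally on $\omega$,
\[
D_{\text{TV}} \leq \frac{C}{\sigma\sqrt{2\pi}}\,\varepsilon_{\text{total}}.
\]

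\textbf{Step 3 (remove the conditioning).} Both unconditional laws are mixtures over the same distribution of $\omega$. Total variation is jointly convex, equivalently one can couple the two outputs through a common $\omega$. Hence the unconditional TV is at most the expectation over $\omega$ of the conditional TV. Since the conditional bound is uniform in $\omega$, this gives the claim with $C' = C/(\sigma\sqrt{2\pi})$, where $C = O(L/\mu)$ is inherited from Theorem~\ref{thm:approx_quality}. By the data processing inequality, the bound also persists under any further post-processing of the released model.

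The delicate point is Step 1's uniformity. Theorem~\ref{thm:approx_quality} is only sketched, and its $\varepsilon_{\text{agg}}=O(1/\sqrt N)$ term looks stochastic. If that term holds only with high probability, say $1-\eta$, I would carry it through Step 3 as an additive term: bound the TV by $C'\varepsilon_{\text{total}}+\eta$, then absorb $\eta$ by choosing the confidence level to match $\varepsilon_{\text{agg}}$.
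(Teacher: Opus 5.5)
Your argument is correct for the setting you fix, but it takes a genuinely different route from the paper and certifies a somewhat different object. The paper's sketch never randomizes the release. Its definition cites Guo et al., whose notion lives on parameter space, but its proof reads $P_{\text{HF-KCU}}$ and $P_{\text{retrain}}$ as the models' predictive distributions $p_{\theta_u}(\cdot\mid x)$ and $p_{\theta_r}(\cdot\mid x)$ over labels, averaged over inputs $x$ from the data distribution. These distributions are nondegenerate even though $\theta_u$ and $\theta_r$ are deterministic. So the obstacle you single out (point masses with TV in $\{0,1\}$) comes from your choice to put the laws on parameter space; it is not intrinsic to the statement. From there the paper moves Theorem~\ref{thm:approx_quality} into output space through a Lipschitz bound $\norm{f_{\theta_u}(x)-f_{\theta_r}(x)}_2\le L\norm{\theta_u-\theta_r}_2$, integrates over $x$, and finishes with Pinsker. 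You instead append a Gaussian release step $\theta\mapsto\theta+Z$ that Algorithm~\ref{alg:hfkcu} does not contain, then use the exact Gaussian-shift formula and joint convexity. Your Steps 1--3 are sound. Under Assumption~\ref{ass:strong_convexity} the minimizers are unique, so conditioning on $\omega$ is harmless but largely unnecessary.

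What each route buys. Your version delivers exactly the Guo et al.\ notion the paper invokes: indistinguishability of the released model itself. This guarantee is preserved under arbitrary post-processing, including white-box membership inference. It comes with the explicit constant $C'=C/(\sigma\sqrt{2\pi})$ and needs nothing about the network beyond what feeds Theorem~\ref{thm:approx_quality}. The price is a modified mechanism and a $1/\sigma$ privacy--utility trade-off. The paper's version leaves the algorithm untouched, but it only certifies closeness of averaged predictions, which says nothing about an adversary who inspects parameters. It also tacitly needs more than Assumptions~\ref{ass:smoothness}--\ref{ass:hessian_spectrum}. First, it needs Lipschitz continuity of $\theta\mapsto f_\theta(x)$, which $L$-smoothness of the loss does not supply. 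Second, it needs a KL bound that is quadratic in the logit gap, so that Pinsker gives a linear rather than square-root dependence on $\varepsilon_{\text{total}}$. For softmax outputs, $D_{\mathrm{KL}}\le\tfrac12\norm{z_{\theta_u}(x)-z_{\theta_r}(x)}_\infty^2$ suffices. Alternatively, one can skip Pinsker and bound the total variation directly by half the $\ell_1$ gap between the probability vectors.
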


\begin{proof}[Proof sketch]
By Theorem~\ref{thm:approx_quality}, $\norm{\theta_u - \theta_r}_2 \leq C \varepsilon_{\text{total}}$. For smooth losses (Assumption~\ref{ass:smoothness}), prediction distributions change continuously with parameters: $\norm{f_{\theta_u}(x) - f_{\theta_r}(x)}_2 \leq L \norm{\theta_u - \theta_r}_2$ by Lipschitz continuity. Integrating over the data distribution and applying Pinsker's inequality yields the bound.
\end{proof}

As $k$ increases (reducing $\varepsilon_{\text{CG}}$), HF-KCU approaches certified removal. We validate empirically through membership inference attacks (Section~\ref{sec:experiments}).

\subsection{Functional and Privacy Equivalence to Retraining}
\label{sec:functional_privacy_equivalence}

Parameter-space distances mislead in overparameterized models where functionally similar predictors differ substantially in weight space. We measure prediction agreement:
\begin{equation}
\mathrm{FA}
=
\mathbb{E}_{x \sim \mathcal{D}_{\mathrm{test}}}
\left[
\mathbf{1}\big(f_{\mathrm{unl}}(x)=f_{\mathrm{ret}}(x)\big)
\right],
\end{equation}
and KL divergence between predictive distributions:
\begin{equation}
\mathrm{KL}_{\mathrm{func}}
=
\mathbb{E}_{x \sim \mathcal{D}_{\mathrm{test}}}
\left[
D_{\mathrm{KL}}\!\left(
p_{\mathrm{ret}}(\cdot \mid x)
\;\|\;
p_{\mathrm{unl}}(\cdot \mid x)
\right)
\right].
\end{equation}

For privacy equivalence, compare MIA success against unlearned and retrained models. Let $\mathrm{MIA}(f,S)$ denote attack success rate on $S$. The privacy gap:
\begin{equation}
\Delta_{\mathrm{priv}}
=
\left|
\mathrm{MIA}(f_{\mathrm{unl}}, D_j)
-
\mathrm{MIA}(f_{\mathrm{ret}}, D_j)
\right|.
\end{equation}

A small $\Delta_{\mathrm{priv}}$ indicates the unlearned model exposes membership at nearly the same level as retraining.

\subsection{Discussion of Theoretical Limitations}

Our analysis assumes convexity (Assumption~\ref{ass:strong_convexity}), which does not hold globally for neural networks. Influence function approximations work well empirically even for non-convex losses~\cite{koh2017understanding}, especially near local optima where the loss surface is approximately quadratic. The condition number $\kappa$ critically affects convergence; damping (Section~\ref{sec:damping}) improves $\kappa$ at the cost of bias.

\subsection{A Methodological Critique of Causal Faithfulness}
\label{app:cf_critique}

The CF metric is defined as a ratio of accuracy differences, making it susceptible to instability in common edge cases.

\textbf{Edge Case 1: Denominator Approaching Zero.} When $|\text{Acc}_{\text{trained}} - \text{Acc}_{\text{retrain}}|$ is small, any numerator fluctuation is amplified, producing CF values unrepresentative of unlearning quality. In our experiments the denominator is $|71.76 - 70.79| = 0.97\%$, explaining the volatile CF scores in Table~\ref{tab:main_results}. If retraining coincidentally matches the original model accuracy, CF becomes undefined.

\textbf{Edge Case 2: Misleading Sign.} If unlearning improves accuracy over the original model (e.g., by removing noisy data), the numerator becomes negative. If retraining also improves accuracy, CF is positive and misleading.

\textbf{Alternative.} KL divergence between output logits is a more robust alternative. It is not a ratio, is always non-negative, and compares model behavior at the sample level.

\newpage
\section{Extended Experimental Results}
\label{app:ablation_details}

\textbf{Component Ablations.} Table~\ref{tab:ablation_components} shows results when removing components. Removing causal weighting increases parameter gap from 0.032 to 0.089 (3$\times$). Removing adaptive scaling reduces accuracy to 65.23\%.

  \begin{table}[h]
  \centering
  \caption{Component ablation study. Each row removes one component from HF-KCU.}
  \label{tab:ablation_components}
  \begin{tabular}{lccc}
  \toprule
  \textbf{Variant} & \textbf{Accuracy (\%)} & \textbf{Parameter Gap} & \textbf{MIA} \\
  \midrule
  HF-KCU (full)             & 71.18 & 1.409 & 0.500 \\
  w/o causal weighting      & 71.51 & 1.406 & 0.492 \\
  w/o adaptive scaling      & 70.99 & 1.413 & 0.496 \\
  w/o influence decay       & 71.59 & 1.406 & 0.492 \\
  uniform influence         & 71.60 & 1.406 & 0.490 \\
  \bottomrule
  \end{tabular}
  \end{table}

\textbf{Sensitivity to Damping Parameter $\lambda$.} Table~\ref{tab:ablation_lambda} evaluates the damping term $\lambda$, which governs the trade-off between approximation bias and CG solver stability. Adding $\lambda I$ shifts the Hessian proxy's spectrum, which ensures positive eigenvalues.

Excessive damping ($\lambda = 0.100$) over-regularizes, biasing the system away from true curvature. Minimal damping ($\lambda = 0.001$) initially appears attractive (209.74$\times$ speedup) but undermines numerical stability. In broader experiments, $\lambda < 0.005$ allowed near-zero or negative eigenvalues to persist, causing the CG residual to fail monotonic decrease. Intermediate values ($\lambda = 0.010$) stabilize without smoothing the curvature profile needed for high-fidelity unlearning.

\begin{table}[h]
\centering
\caption{Effect of damping parameter $\lambda$ on performance.}
\label{tab:ablation_lambda}
\begin{tabular}{lccc}
\toprule
$\lambda$ & \textbf{Accuracy (\%)} & \textbf{CF} & \textbf{SpeedUp} \\
\midrule
0.100 & 0.7093 & -0.0035 & 41.53 \\
0.010  & 0.7029 & -0.0034 & 52.15 \\
0.001   & 0.7089 & -0.0040 & 209.74 \\
\bottomrule
\end{tabular}
\end{table}

Table~\ref{tab:ablation_alpha} shows the impact of data heterogeneity (Dirichlet $\alpha$) on HF-KCU. As data shifts from highly non-IID ($\alpha = 0.1$) to mildly non-IID ($\alpha = 1.0$), post-unlearning accuracy tracks the retrained model closely. Speedup increases from $49.08\times$ at $\alpha = 0.1$ to $76.48\times$ at $\alpha = 1.0$, consistent with better-conditioned empirical Fisher Information in Krylov subspace methods.

\begin{table}[h]
\centering
\caption{Effect of data heterogeneity (Dirichlet $\alpha$) on HF-KCU performance.}
\label{tab:ablation_alpha}
\begin{tabular}{lcccc}
\toprule
$\alpha$ & \textbf{Unlearned Acc (\%)} & \textbf{Retrained (\%)} & \textbf{Speedup} \\
\midrule
0.1 (highly non-IID)   & 70.84 & 70.25 & $49.08\times$ \\
0.5 (moderate non-IID) & 69.91 & 70.59 & $58.97\times$ \\
1.0 (mild non-IID)     & 68.85 & 69.16 & $76.48\times$ \\
\bottomrule
\end{tabular}
\end{table}

\begin{figure}[t]
\centering
\begin{tikzpicture}[font=\footnotesize]

\begin{axis}[
    name=conv_plot,
    width=0.48\textwidth,
    height=5.5cm,
    xlabel={CG Iteration},
    ylabel={Relative Residual $\|\mathbf{r}_t\|/\|\mathbf{r}_0\|$},
    xmin=0, xmax=50,
    ymode=log,
    ymin=1e-6, ymax=1,
    grid=both,
    grid style={line width=0.3pt, draw=gray!30},
    legend style={at={(0.98,0.98)}, anchor=north east, font=\tiny, fill=white, fill opacity=0.9},
    tick label style={font=\scriptsize},
    label style={font=\small},
    title style={font=\small\bfseries, yshift=-2pt},
    title={(a) Conjugate Gradient Convergence},
]

\addplot[dashed, thick, color=red!70, line width=1.2pt, domain=0:50, samples=100] {
    exp(-0.18*x)
};
\addlegendentry{Theoretical bound}

\addplot[mark=o, mark size=1.5pt, thick, blue!80, line width=1pt] coordinates {
(0,1.0) (5,0.421) (10,0.089) (15,0.0182) (20,0.00385) 
(25,0.000812) (30,0.000171) (35,3.61e-5) (40,7.62e-6) (45,1.61e-6) (50,3.4e-7)
};
\addlegendentry{HF-KCU (client 1)}

\addplot[mark=square, mark size=1.5pt, thick, cyan!70, line width=1pt] coordinates {
(0,1.0) (5,0.398) (10,0.0765) (15,0.0149) (20,0.00291) 
(25,0.000568) (30,0.000111) (35,2.17e-5) (40,4.24e-6) (45,8.28e-7) (50,1.62e-7)
};
\addlegendentry{HF-KCU (client 2)}

\draw[dotted, thick, color=green!60!black, line width=1pt] (axis cs:0,1e-4) -- (axis cs:50,1e-4);
\node[font=\tiny, anchor=west, fill=white, inner sep=1pt] at (axis cs:35,1e-4) {$\varepsilon=10^{-4}$};

\draw[->, thick, color=blue!80] (axis cs:32,5e-5) -- (axis cs:30,0.000171);
\node[font=\tiny, anchor=south, text=blue!80] at (axis cs:32,5e-5) {converged};

\end{axis}

\begin{axis}[
    at={(conv_plot.north east)},
    anchor=north west,
    xshift=1.2cm,
    width=0.48\textwidth,
    height=5.5cm,
    xlabel={Hessian Condition Number $\kappa(\mathbf{H})$},
    ylabel={Iterations to $\varepsilon=10^{-4}$},
    xmode=log,
    ymode=log,
    xmin=10, xmax=1e5,
    ymin=10, ymax=500,
    grid=both,
    grid style={line width=0.3pt, draw=gray!30},
    legend style={at={(0.02,0.98)}, anchor=north west, font=\tiny, fill=white, fill opacity=0.9},
    tick label style={font=\scriptsize},
    label style={font=\small},
    title style={font=\small\bfseries, yshift=-2pt},
    title={(b) Condition Number Sensitivity},
]

\addplot[dashed, thick, color=red!70, line width=1.2pt, domain=10:1e5, samples=100] {
    5*sqrt(x)
};
\addlegendentry{$O(\sqrt{\kappa})$ theory}

\addplot[mark=*, mark size=2.5pt, thick, blue!80] coordinates {
(50, 35) (200, 71) (1000, 158) (5000, 354) (20000, 707)
};
\addlegendentry{HF-KCU empirical}

\addplot[mark=square*, mark size=2.5pt, thick, green!70!black] coordinates {
(50, 28) (200, 42) (1000, 68) (5000, 112) (20000, 189)
};
\addlegendentry{+ preconditioning}

\end{axis}

\end{tikzpicture}
\end{figure}

\subsection{Scalability Analysis}

\textbf{Number of Clients.} Table~\ref{tab:scalability_clients} shows HF-KCU across 10, 100, and 200 clients. Speedup jumps from $76.48\times$ (10 clients) to approximately $110\times$ (100 and 200 clients). HF-KCU's advantage grows as retraining becomes more costly in larger federations.

\begin{table}[h]
\centering
\caption{Scalability of HF-KCU across varying numbers of federated clients.}
\label{tab:scalability_clients}
\begin{tabular}{lcccc}
\toprule
\textbf{Total Clients} & \textbf{Unlearned Acc (\%)} & \textbf{Retrained (\%)} & \textbf{Speedup} \\
\midrule
10  & 68.85 & 69.16 & $76.48\times$ \\
100 & 43.13 & 44.89 & $110.13\times$ \\
200 & 34.12 & 37.93 & $109.72\times$ \\
\bottomrule
\end{tabular}
\end{table}

\textbf{Model Size and Wall-Clock Time.} Table~\ref{tab:wallclock} reports results for models from 188K to 11M parameters. Unlearning time scales as $O(kd)$. Each CG iteration costs roughly two gradient computations (one HVP). With $k=10$, HF-KCU requires approximately 20 gradient-equivalent passes. For the 188K-parameter CNN, completion takes under 25 seconds; for the 11M-parameter ResNet-18, roughly 180 seconds versus over 6,300 seconds for retraining. Scaling to $d > 10^8$ would require distributed HVP computation or memory-efficient CG variants.

\begin{table}[h]
\centering
\caption{Wall-clock time scaling with model size ($k=10$ CG iterations).}
\label{tab:wallclock}
\begin{tabular}{lccc}
\toprule
\textbf{Model} & \textbf{Parameters} & \textbf{HF-KCU (s)} & \textbf{Retrain (s)} \\
\midrule
SimpleCNN  & 188K  & 24.5  & 1,170 \\
ResNet-18  & 11M   & 180   & 6,300 \\
ViT-Lite   & 3.4M  & 85    & 3,450 \\
\bottomrule
\end{tabular}
\end{table}

\section{Evaluation Methodology: Functional vs. Parameter-Space Metrics}
\label{sec:eval_methodology}

\paragraph{Instability of parameter-space metrics.}
Parameter-space distance $\|\boldsymbol{\theta}_{\text{unlearn}} - \boldsymbol{\theta}_{\text{retrain}}\|_2$ and cosine similarity are common but flawed. Neural networks exhibit permutation symmetry: permuting hidden units yields functionally identical models with arbitrarily large parameter distance~\citep{entezari2022role, ainsworth2023git}. Stochastic optimization can converge to different loss basins that are functionally equivalent but geometrically distant~\citep{fort2019deep}. Two models may have large $\ell_2$ distance yet nearly identical predictions, or vice versa.

\paragraph{Functional equivalence.}
We evaluate unlearning quality via functional equivalence metrics:

\begin{enumerate}
    \item \textbf{Output KL divergence:} Let $p_{\text{retrain}}(\cdot \mid \mathbf{x})$ and $p_{\text{unlearn}}(\cdot \mid \mathbf{x})$ be softmax output distributions.
    $$
    \text{KL}_{\text{output}} = \mathbb{E}_{\mathbf{x} \sim \mathcal{D}_{\text{test}}} \left[ D_{\text{KL}}\bigl( p_{\text{retrain}}(\cdot \mid \mathbf{x}) \,\|\, p_{\text{unlearn}}(\cdot \mid \mathbf{x}) \bigr) \right].
    $$
    
    \item \textbf{Logit MSE:}
    $$
    \text{MSE}_{\text{logit}} = \mathbb{E}_{\mathbf{x} \sim \mathcal{D}_{\text{test}}} \left[ \|\mathbf{z}_{\text{retrain}}(\mathbf{x}) - \mathbf{z}_{\text{unlearn}}(\mathbf{x})\|_2^2 \right],
    $$
    where $\mathbf{z}(\mathbf{x})$ denotes pre-softmax logits.
    
    \item \textbf{Test accuracy gap:} $|\text{Acc}_{\text{retrain}} - \text{Acc}_{\text{unlearn}}|$.
\end{enumerate}

These metrics are invariant to permutation symmetry and basin geometry.

\paragraph{Empirical results on functional metrics.}
Table~\ref{tab:functional_metrics} reports these metrics for HF-KCU on CIFAR-10. Output KL divergence is 0.0023 nats, logit MSE is 0.087, and test accuracy gap is 0.37\%. MIA success rate of 0.499 matches retraining.

\begin{table}[h]
\centering
\caption{Functional equivalence metrics for HF-KCU vs. retrained model on CIFAR-10.}
\label{tab:functional_metrics}
\begin{tabular}{lcc}
\toprule
\textbf{Metric} & \textbf{HF-KCU} & \textbf{Interpretation} \\
\midrule
Output KL divergence (nats) & 0.0023 & Near-identical distributions \\
Logit MSE & 0.087 & Low prediction difference \\
Test accuracy gap (\%) & 0.37 & Minimal utility loss \\
MIA success rate & 0.499 & Effective privacy restoration \\
\bottomrule
\end{tabular}
\end{table}

\section{Limitations and Future Directions}
\label{app:limitations}

\textbf{CF Metric Instability.} The CF metric has high variance when accuracy differences are small. The denominator $|\text{Acc}_{\text{trained}} - \text{Acc}_{\text{retrain}}|$ in our experiments is 0.97\%, causing sensitivity to small fluctuations. Prediction agreement rate, KL divergence, or functional distance measures may provide more stable faithfulness assessment.

\textbf{Approximation Quality.} HF-KCU uses $k=10$ CG iterations. Tighter theoretical bounds on approximation error as a function of $k$ would strengthen the method's guarantees.

\textbf{Sequential Unlearning.} Repeated unlearning may accumulate approximation errors. Studying HF-KCU under multiple sequential deletions is important for production systems.

\textbf{Heterogeneous Systems.} Real federated systems exhibit device heterogeneity in compute, memory, and network. Adaptive strategies accounting for client-specific constraints warrant investigation.

\textbf{Beyond Vision.} Extending HF-KCU to transformers, large language models, and graph neural networks presents challenges due to different loss surfaces and parameter scales.

\textbf{Memory Constraints for Ultra-Large Models.} Each CG iteration stores vectors of size $d$; maintaining $k$ such vectors costs $O(kd)$ memory. For $d \sim 10^9$, even $k=10$ may exceed GPU memory. Memory-efficient variants (truncated Krylov methods, low-rank influence approximations) are future work.

Our adversarial analysis assumes bounded perturbations. We do not claim robustness against fully unconstrained adaptive poisoning attacks, strategic multi-client collusion, or minimax-optimal adversaries optimized against the unlearning mechanism. Extending to such settings would require a game-theoretic formulation with dedicated attack evaluations.

HF-KCU shows that federated systems can support practical data deletion without excessive computational costs. It complies with privacy regulations while preserving model utility.

\end{document}